\documentclass[conference]{IEEEtran}
\usepackage{ifpdf}
\usepackage{latexsym,amsfonts,amssymb,amsmath,graphicx,epsf,cite,bbm,float}
\usepackage{ifpdf,flexisym}
\usepackage{epstopdf}
\usepackage{caption}
\usepackage{subcaption}
\usepackage{comment}
\usepackage{amsthm}
\usepackage{mathtools}
\usepackage{bbm}
\usepackage{lettrine}
\usepackage{algorithm}
\usepackage{algorithmic}

\newtheorem{theorem}{Theorem}
\newtheorem{lemma}{Lemma}

\newtheorem{proposition}{Proposition}
\newtheorem{corollary}{Corollary}
\theoremstyle{definition}
\newtheorem{definition}{Definition}

\theoremstyle{remark}

\begin{document}
\title{Interaction Information for Causal Inference:\\ The Case of Directed Triangle}

\author{
\IEEEauthorblockN{AmirEmad Ghassami}
\IEEEauthorblockA{Department of ECE\\
Coordinated Science Laboratory\\
University of Illinois at Urbana-Champaign\\
ghassam2@illinois.edu}
\and
\IEEEauthorblockN{Negar Kiyavash}
\IEEEauthorblockA{Departments of ECE and ISE\\
Coordinated Science Laboratory\\
University of Illinois at Urbana-Champaign\\
kiyavash@illinois.edu}
}

\maketitle

\begin{abstract}
To be considered for the 2017 IEEE Jack Keil Wolf ISIT Student Paper Award. - Interaction information is one of the multivariate generalizations of mutual information, which expresses the amount information shared among a set of variables, beyond the information, which is shared in any proper subset of those variables. Unlike (conditional) mutual information, which is always non-negative, interaction information can be negative. We utilize this property to find the direction of causal influences among variables in a triangle topology under some mild assumptions.
\end{abstract}

\begin{IEEEkeywords}
Mutual information generalization, Interaction information, Causal inference
\end{IEEEkeywords}

\section{Introduction}
\label{sec:intro}

Mutual information is one of the fundamental information-theoretic quantities which measures the co-dependence between two random variables. 
Mutual information could be generalized to the multivariate case in different ways. The most well known generalizations are total correlation \cite{watanabe1960information} (also known as multi-information \cite{studeny1998multiinformation}), and interaction information \cite{mcgill1954multivariate,fano1961book}.
In this work, we focus on interaction information, another information theoretic quantity intimately related to mutual information. This quantity has been studied from different view points and under different names in the literature \cite{mcgill1954multivariate,fano1961book,yeung1991new,bell2003co,jakulin2003quantifying}.
In the case of three random variables, interaction information
is the gain (or loss) in information transmitted between any two of the variables, due to additional knowledge of the third random variable \cite{mcgill1954multivariate}.
That is, interaction information is the difference between the conditional and unconditional mutual information between two of the variables, where the conditioning is on the third variable. It is important to note that unlike (conditional) mutual information which is always non-negative, interaction information can be negative. In fact, this is the property we take advantage of in this study. We will show that the sign of interaction information may be used to identify the direction of influence among variables, a fundamental problem of interest in causal inference. Other information-theoretic quantities such as entropy and directed information have also been proposed to infer causality in appropriate settings \cite{quinn2013efficient, etesami2016learning, quinn2011equivalence, kocaoglu2016entropic, etesami2016learning2}. 

Learning causal relations among variables is a canonical problem in several fields of science such as economics, biology, computer science, etc. In an observational setup, where performing interventions is not possible, the main approach to identify direction of influences is to perform some sort of statistical dependency tests on data \cite{pearl2009causality}. 
The triangle structure comprised of three variables on a cycle of length three is one of the most problematic structures. 
This is because of the fact that dependency tests, which are typically performed to find the directions, all fail in this setting.
In Pearl's language \cite{pearl2009causality}, this is because all triangles are in the same Markov equivalent class. We will show that under certain conditions, using the sign of interaction information, we can uniquely identify the underlying causal influences in a triangle structure.

The rest of the paper is organized as follows: In Section \ref{sec:II} we provide the formal definition of interaction information, as well as some of its properties. In Section \ref{sec:causal}, after introducing the problem of our interest, a discussion regarding the sign of interaction information is provided. In the same section we outline our approach for identifying causal relationships among three variable structures, which could not be identified using merely conventional dependency tests. Our concluding remarks are stated in Section \ref{sec:conc}.

\section{Interaction Information}
\label{sec:II}

The general formula for interaction information for a set of variables $V$ is defined as \cite{bell2003co}
\[
I(V)\coloneqq\sum_{U\subseteq V}(-1)^{|U|+1}H(U),
\]
where $|U|$ denotes the cardinality of the subset $U$ and $H(\cdot)$ is Shannon's entropy function (note that $H(\emptyset)=0$).
Intuitively, interaction information is the amount of information shared by all the variables together.

For the case of three variables, $X$, $Y$ and $Z$:
\begin{align*}
I(X;Y;Z)=&+[H(X)+H(Y)+H(Z)]\\
&-[H(X,Y)+H(X,Z)+H(Y,Z)]\\
&+H(X,Y,Z).
\end{align*}
Here, interaction information could be represented in terms of mutual information as follows \cite{mcgill1954multivariate}:
\begin{proposition}
\label{prop:intinfo}
For the case of three variables, the interaction information could be written as
\begin{align*}
I(X;Y;Z)
&=I(X;Y)-I(X;Y|Z)\\
&=I(X;Z)-I(X;Z|Y)\\
&=I(Y;Z)-I(Y;Z|X).
\end{align*}
\end{proposition}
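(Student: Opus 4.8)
The plan is to work entirely at the level of Shannon entropies and reduce both sides to the defining sum for $I(X;Y;Z)$. First I would record the two elementary identities connecting mutual information to entropy: the unconditional version $I(X;Y)=H(X)+H(Y)-H(X,Y)$, and the conditional version $I(X;Y|Z)=H(X,Z)+H(Y,Z)-H(Z)-H(X,Y,Z)$. The latter follows from $I(X;Y|Z)=H(X|Z)+H(Y|Z)-H(X,Y|Z)$ together with the chain rule $H(\,\cdot\,,Z)=H(\,\cdot\,|Z)+H(Z)$.

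Subtracting the conditional identity from the unconditional one then yields
\begin{align*}
I(X;Y)-I(X;Y|Z)=\,&H(X)+H(Y)+H(Z)\\
&-H(X,Y)-H(X,Z)-H(Y,Z)\\
&+H(X,Y,Z),
\end{align*}
which is exactly the entropy expansion of $I(X;Y;Z)$ given above, establishing the first equality. For the remaining two, I would simply observe that the right-hand side of the last display — and hence $I(X;Y;Z)$ itself — is invariant under any permutation of $X$, $Y$, $Z$. Applying the identity just proved with the roles of the variables permuted therefore gives $I(X;Y;Z)=I(X;Z)-I(X;Z|Y)$ and $I(X;Y;Z)=I(Y;Z)-I(Y;Z|X)$ at no extra cost.

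There is essentially no obstacle here; the only point requiring care is getting the signs right when rewriting $I(X;Y|Z)$ in terms of entropies, since a slip there would break the cancellation. A convenient sanity check is to note that $I(X;Y)$ and $I(X;Y|Z)$ are each themselves alternating sums of entropies over subsets (of $\{X,Y\}$ and of $\{X,Y,Z\}$ respectively), so their difference is again such an alternating sum, and one only needs to confirm that the singleton, pair, and triple terms — together with $H(\emptyset)=0$ — match the definition of $I(V)$ for $V=\{X,Y,Z\}$.
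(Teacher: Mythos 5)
Your proof is correct: the entropy expansions of $I(X;Y)$ and $I(X;Y|Z)$ are both right, their difference reduces exactly to the alternating-sum definition of $I(X;Y;Z)$, and the permutation-symmetry observation legitimately yields the other two equalities. The paper itself gives no proof of this proposition (it only cites McGill), and your argument is precisely the standard one that justifies it, so there is nothing further to reconcile.
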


Using this formulation, one can see that in the case of three variables, interaction information quantifies how much the information shared between two variables defers from what they share if the third variable was known.  Figure \ref{fig:intinfo} depicts a graphical representation of the information-theoretic quantities of our interest.

\begin{figure}[t]
\centering
\includegraphics[scale=0.5]{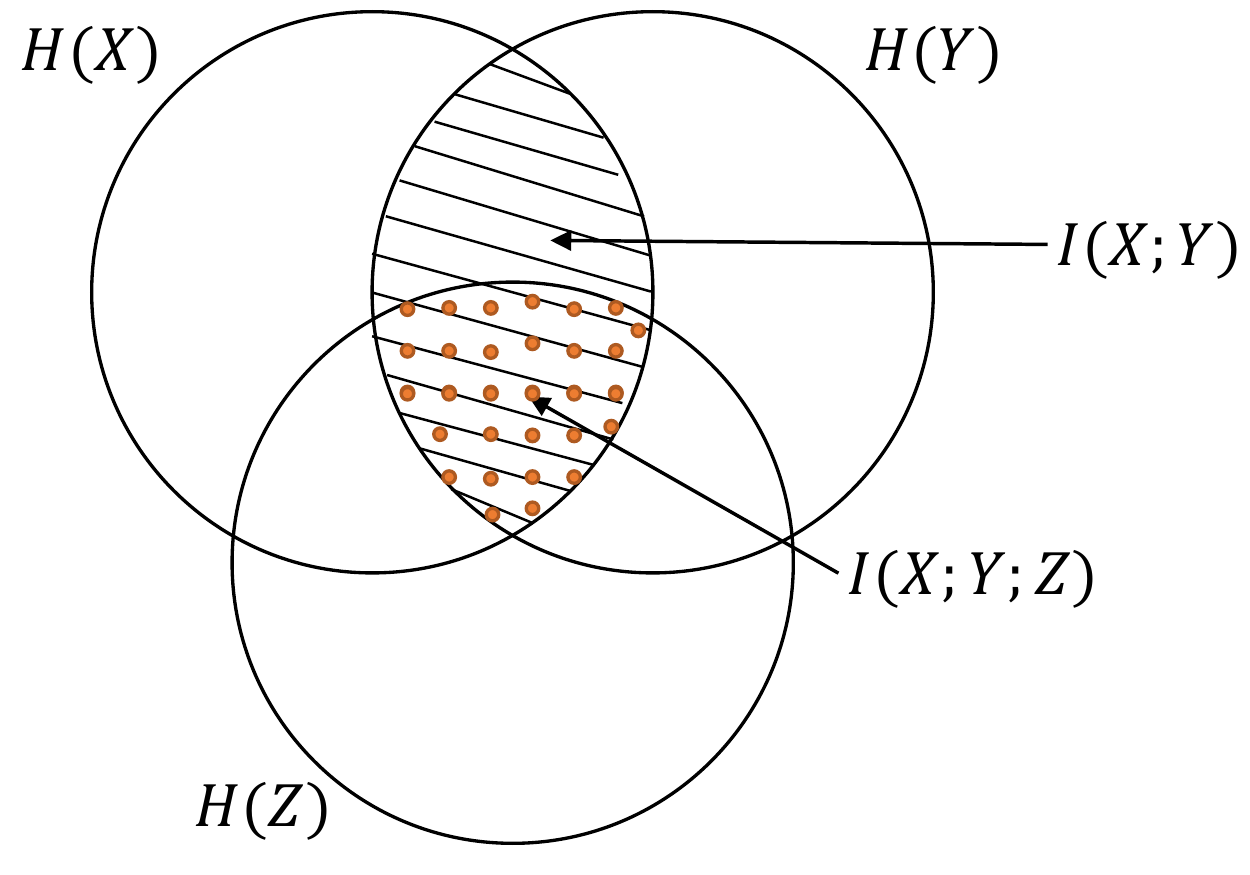}
\caption{Graphical representation of information theoretic quantities.}
\label{fig:intinfo}
\end{figure}

Several properties of interaction information in the case of three variables has been studied in the literature. Specifically, Yeung \cite{yeung1991new} showed that
\begin{align*}
-\min&\{I(X;Y|Z),I(X;Z|Y),I(Y;Z|X)\}\\
&\le I(X;Y;Z)\le\min\{I(X;Y),I(X;Z),I(Y;Z)\}.
\end{align*}
We refer readers to \cite{tsujishita1995triple}, where Tsujishita has provided a more in depth mathematical study of the bounds on interaction information as well as some other properties for this quantity.

We present another property of the interaction information in the following Lemma, which could be proven using the chain rule for mutual information.
\begin{lemma}
\label{lem:prop}
For the case of three variables, the interaction information could be written as
\begin{align*}
I(X;Y;Z)
&=I(X,Y;Z)-I(X;Z|Y)-I(Y;Z|X)\\
&=I(X,Z;Y)-I(X;Y|Z)-I(Z;Y|X)\\
&=I(Y,Z;X)-I(Y;X|Z)-I(Z;X|Y).
\end{align*}
\end{lemma}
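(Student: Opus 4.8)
The plan is to derive the first identity directly from the chain rule for mutual information together with Proposition~\ref{prop:intinfo}, and then obtain the remaining two by a symmetry argument. First I would expand $I(X,Y;Z)$ via the chain rule in the form $I(X,Y;Z)=I(X;Z)+I(Y;Z|X)$. Next, Proposition~\ref{prop:intinfo} gives $I(X;Y;Z)=I(X;Z)-I(X;Z|Y)$, that is, $I(X;Z)=I(X;Y;Z)+I(X;Z|Y)$. Substituting this into the chain-rule expansion yields $I(X,Y;Z)=I(X;Y;Z)+I(X;Z|Y)+I(Y;Z|X)$, and rearranging produces the first claimed equality.

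For the other two identities I would invoke the fact that the interaction information $I(X;Y;Z)$ is a symmetric function of its three arguments, which is immediate from its definition as an alternating sum of the entropies of all subsets of $\{X,Y,Z\}$. Hence the argument above, applied after permuting the labels of the variables (for instance swapping $Y$ and $Z$ to obtain the second line, and $X$ and $Z$ to obtain the third), produces the remaining two expressions. Equivalently, one may expand $I(X,Z;Y)$ and $I(Y,Z;X)$ by the chain rule and repeat the same substitution verbatim, which avoids relying on the permutation bookkeeping.

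The computation is short, so I do not anticipate a genuine obstacle; the only point requiring care is the bookkeeping in the symmetrization step, where one must verify that the chosen permutation sends each conditional mutual information term to the correct place, or else simply re-derive each of the three lines independently via the chain rule. As a consistency check it is worth observing that the three right-hand sides must all be equal, which they indeed are, precisely because the three expressions for $I(X;Y;Z)$ in Proposition~\ref{prop:intinfo} are equal.
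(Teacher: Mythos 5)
Your proof is correct and follows exactly the route the paper indicates (the lemma is stated there with only the remark that it "could be proven using the chain rule for mutual information," with no further detail). Your combination of the chain rule $I(X,Y;Z)=I(X;Z)+I(Y;Z|X)$ with Proposition~\ref{prop:intinfo}, followed by the symmetry of $I(X;Y;Z)$ to get the remaining two lines, is a complete and valid version of that argument.
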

Lemma \ref{lem:prop} may be interpreted as follows. Interaction information is the amount of information variables $X$ and $Y$ share with $Z$, minus the information that is shared between $X$ and $Z$ alone and the information shared between $Y$ and $Z$ alone. We will revisit this property in Section \ref{sec:causal}.\\

\section{Application to Causal Inference}
\label{sec:causal}

\subsection{preliminaries} 
\label{subsec:pre}

In this subsection we introduce some definitions and concepts that we require later. Most of the definitions are adopted from \cite{koller2009probabilistic}.
\begin{definition}
a \textit{directed acyclic graph (DAG)} is a finite directed graph with no directed cycles.
\end{definition}
\begin{definition}
A Bayesian network structure $G$ is a DAG whose nodes represent random variables $X_1, . . . , X_n$. Let $PA_{X_i}$ denote the parents of $X_i$ in $G$, and $ND_{X_i}$ denote the variables in the graph that are not descendants of $X_i$. Then $G$ encodes the following set of conditional independence assumptions:
\[
\text{For each variable } X_i: (X_i \perp ND_{X_i} | PA_{X_i} ).
\]
\end{definition}

Bayesian networks are commonly used to represent causal relationships among the set of variables \cite{pearl2009causality,spirtes2000causation}. In such a representation, a directed edge from variable $X$ to variable $Y$ indicates that variable $X$ is a direct cause of variable $Y$. Therefore, a DAG summarizes the causal relationships among the variables.
\begin{definition}
The skeleton of a Bayesian network graph $G$ over the set of variables $V$ is an undirected graph over $V$ that contains an edge $xy$ for every directed edge $\overset{\rightarrow}{xy}$ in $G$.
\end{definition}

We will focus on two skeletons in this work: $P_2$ and triangle, which are paths of length two and cycles of length 3, both on three variables, respectively.

\begin{definition}
A distribution $P$ is faithful to $G$ if $G$ represents all the independency relations contained in $P$.
\end{definition}
Throughout the rest of the paper, we assume the faithfulness assumption on the probability distribution.
\begin{definition}
Two graph structures $G_1$ and $G_2$ over $V$ are Markov equivalent if every probability distribution that is compatible with one of the graphs is also compatible with the other.
The set of all graphs over $V$ is partitioned into a set of mutually exclusive and exhaustive Markov equivalence classes, which are the set of equivalence classes induced by the Markov equivalence relation.
\end{definition}

In Subsection \ref{subset:tri}, we need to be able to quantify the strength of a causal effect, which is an important topic of research on its own in the field of causal inference. For this purpose, we use the results from  \cite{janzing2013quantifying}.

Let $G$ be a DAG on a set of variables $V=\{X_1, X_2,...,X_n\}$. Following \cite{janzing2013quantifying}, we define the strength of the causal influence of a set of arrows $S$ as
\begin{align*}
\mathcal{C}_{S}&\coloneqq D(P\|P_{S}),
\end{align*}
where, $D(\cdot\| \cdot)$ denotes the Kullback-Leibler divergence, $P$ is the joint distribution, and $P_{S}$ is the interventional distribution defined as follows. Set $PA^S_j$ as the set of those parents $X_i$ of $X_j$ for which $(i,j)\in S$ and $PA^{\overline{S}}_j$ as those for which $(i,j)\notin S$. Set 
\[P_S(x_j|pa_j^{\overline{S}})=\sum_{pa_j^S}P(x_j|pa_j^{\overline{S}},pa_j^S)P_{\Pi}(pa_j^S),\] where $P_{\Pi}(pa_j^S)$ for a given $j$ denotes the product of marginal distributions of all variables in $PA^S_j$. The interventional distribution is hence defined as
\[
P_S(x_1,...,x_n)\coloneqq\prod_jP_S(x_j|pa_j^{\overline{S}}).
\]
As an example, in the first DAG in Figure \ref{fig:tri}, we have
\begin{align*}
\mathcal{C}_{X\rightarrow Y}
=\sum_{x,y,z}P(x,y,z)\log\frac{P(y|z,x)}{\sum_{x'}P(y|z,x')P(x')}.
\end{align*}

\subsection{$P_2$ DAG}

\begin{figure}[t]
	\centering
	\includegraphics[scale=0.35]{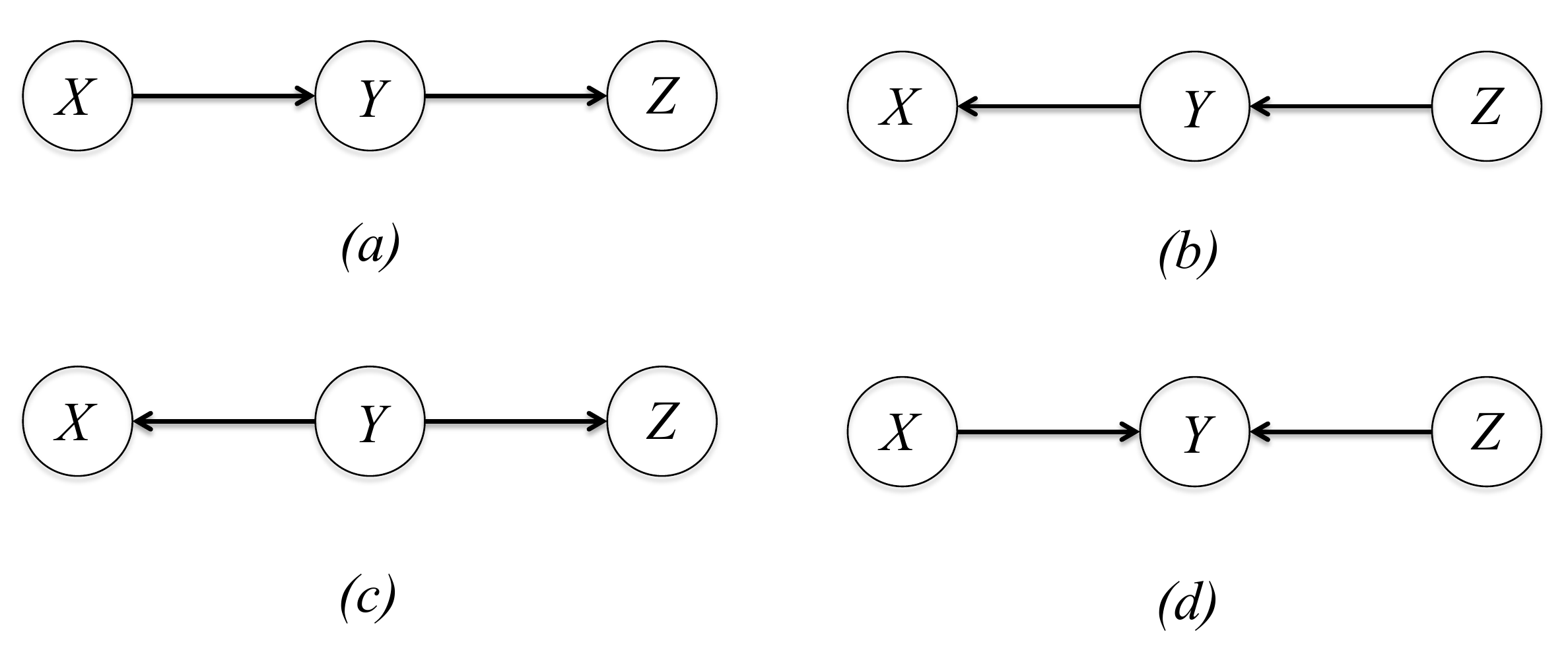}
	\caption{$(a)$, $(b)$: chain structure, $(c)$: fork structure, $(d)$: $v$-structure.}
	\label{fig:v}
\end{figure}

There are 4 possible DAGs on the $P_2$ skeleton for any ordered set of 3 variables $(X,Y,Z)$, where in the skeleton, $Y$ is connected to $X$ and $Z$ (see Figure \ref{fig:v}). The structures in parts $(a)$ and $(b)$ are called a \textit{chain}, $(c)$ is called a \textit{fork}, and $(d)$ is called a \textit{$v$-structure}. In a $v$-structure, the middle variable is called a $collider$.

It is known that in the observational setup, we can identify a DAG at most up to its Markov equivalent class \cite{pearl2009causality}.
Having the information about the skeleton of the DAG (which could be obtained from the correlations), using only dependency tests one can distinguish a $v$-structure from other three: If variables $X$ and $Z$ are dependent given $Y$, but independent when $Y$ is not observed the true structure is a $v$-structure; otherwise, it will be one of the other three. Therefore, for the skeleton $P_2$, the chain structure and the fork structure are in one Markov equivalence class, while the $v$-structure is in a different class.

In the following, we show that determining the correct Markov equivalent class in Figure \ref{fig:v}, could also be performed by calculating the sign of the interaction information, which could be useful from algorithm design point of view.
In general, as evident from Proposition \ref{prop:intinfo}, positive interaction information indicates that each one of the variables partially or completely constitutes the dependency between the other two variables. In Figure \ref{fig:v} parts $(a)$ to $(c)$, given $Y$, variables $X$ and $Z$ are independent. Hence we have \[I(X;Z)\ge0 \text{ ~and~ } I(X;Z|Y)=0,\] which implies that $I(X;Y;Z)\ge0$. On the other hand, negative interaction information indicates that observation of each one of the variables increases the correlation between the other two. In the $v$-structure (Figure \ref{fig:v} part (d)), variables $X$ and $Z$ are independent, but can be dependent conditioned on $Y$. Hence we have \[I(X;Z)=0 \text{ ~and~ }  I(X;Z|Y)\ge0,\] which implies that $I(X;Y;Z)\le0$. Therefore, knowing that the interaction information is positive or negative we can distinguish between the two Markov equivalent classes.

\begin{figure}[t]
	\centering
	\includegraphics[scale=0.5]{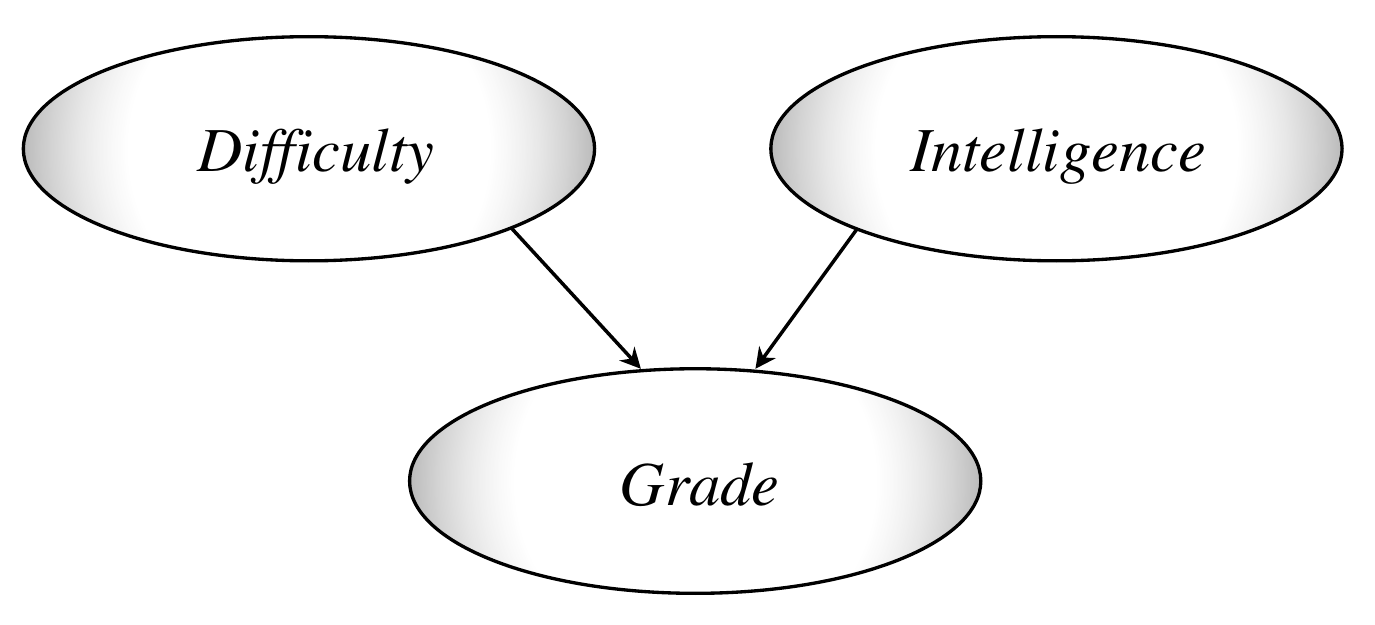}
	\caption{Example of a structure with non-positive interaction information.}
	\label{fig:ex1}
\end{figure}

Note that again in light of Proposition \ref{prop:intinfo}, in the $v$-structure in Figure \ref{fig:v}(d), when $X$ and $Z$ are the common causes of a third variable $Y$, knowing $X$ can increase the correlation between $Z$ and $Y$, a result which may not be intuitive in the first glance.

We use an example from \cite{koller2009probabilistic} to illustrate the case of negative interaction information. Suppose an exam is given to a student. The difficulty of the exam and the intelligence of the student are two independent variables. But, when the student's grade is observed, this new variable correlates the difficulty of the exam and the student's intelligence (see Figure \ref{fig:ex1}). Consider the expression for interaction information represented in Lemma \ref{lem:prop}, with $X=$\textit{Difficulty}, $Y=$\textit{Intelligence}, and $Z=$\textit{Grade}. From Proposition \ref{prop:intinfo}, since $I(X;Y)=0$, it is easy to see that the interaction information is negative. Therefore, the correlation between \textit{Difficulty} and \textit{Grade} when \textit{Intelligence} is observed, and the correlation between \textit{Intelligence} and \textit{Grade} when \textit{Difficulty} is observed are both high and their sum, over calculates the correlation between the pair (\textit{Difficulty}, \textit{Intelligence}) and \textit{Grade}.

\subsection{Triangle DAG}
\label{subset:tri}
 Unlike the structure in Figure \ref{fig:v}, the triangle DAGs, which are DAGs on three variables whose skeleton is a cycle of length 3, are all in the same Markov equivalent class. Therefore, the dependency tests cannot distinguish between graphs with this structure. Nevertheless, triangle DAGs appear in many real-life problems and the ability to reconstruct this structure is of great interest in many fields. Figure \ref{fig:tri} shows all the 6 possible triangle DAGs on the set of variables $\{X,Y,Z\}$. Note that the edge directions must not form a cycle, otherwise the structure will not be a DAG.

\begin{figure}[t]
	\centering
	\includegraphics[scale=0.58]{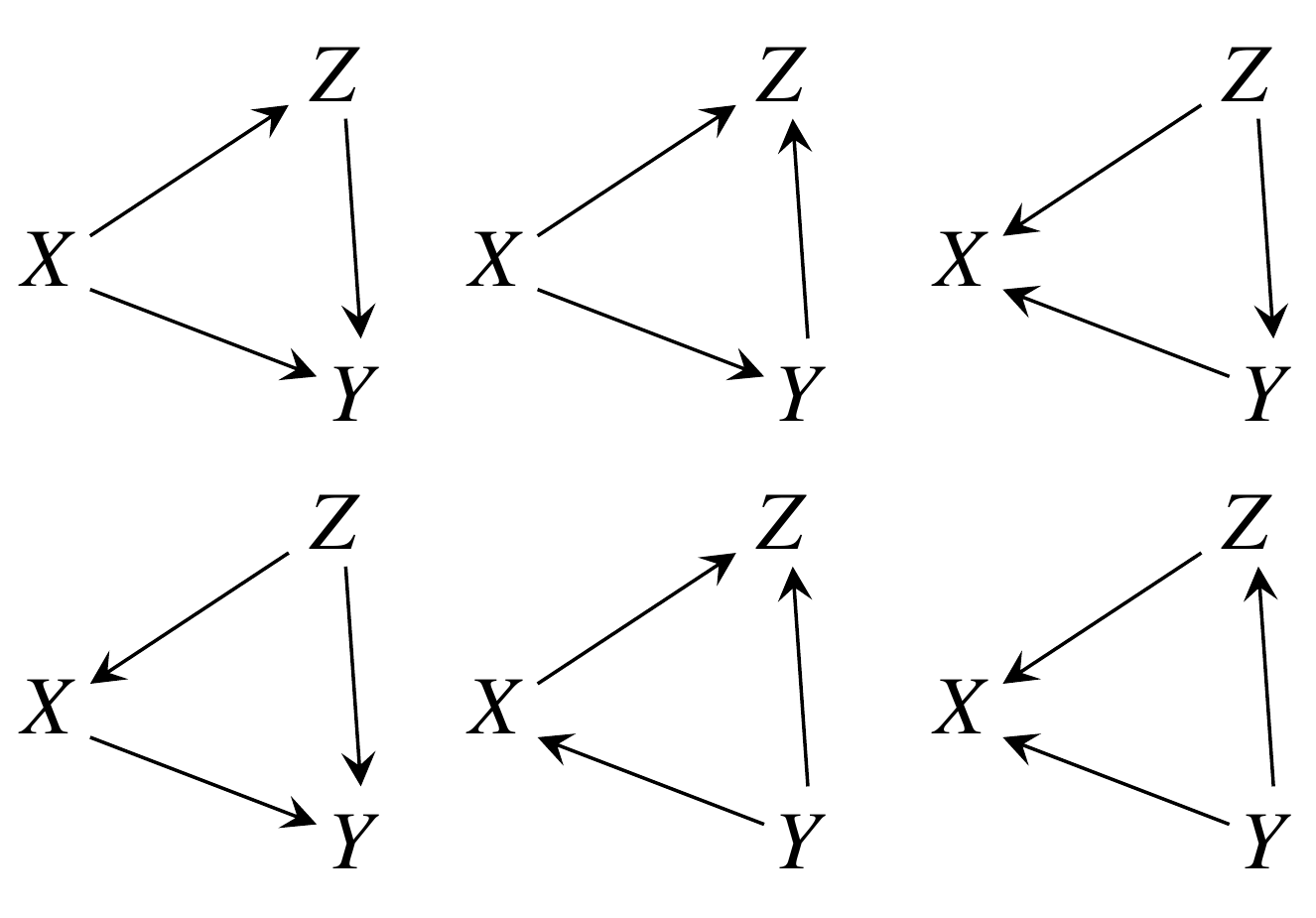}
	\caption{all six possible triangle DAGs on the set of variables $\{X,Y,Z\}$. In the first, second and the third column, variables $Y$, $Z$ and $X$ is the sink variable, respectively.}
	\label{fig:tri}
\end{figure}

In this subsection we will show that under certain conditions, one can still categorize, or in some cases, even uniquely identify a triangle DAG using interaction information. We first observe the following property regarding the triangle DAGs:
\begin{lemma}
\label{lem:tri}
Any triangle DAG contains
\begin{itemize}
\item \text{Root Variable:} Denoted by $R$, the variable which is the cause of the other two variables.
\item \text{Sink Variable:} Denoted by $S$, the variable which is the effect of the other two variables.
\item \text{Bridge Variable:} Denoted by $B$, the variable which is the effect of the root variable and the cause of the sink variables.
\end{itemize}
\end{lemma}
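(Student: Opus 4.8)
The plan is to exploit the fact that, since the skeleton of a triangle DAG is a cycle of length three, \emph{every} pair among $\{X,Y,Z\}$ is joined by exactly one directed edge; in other words, a triangle DAG is an acyclic tournament on three vertices. First I would invoke the standard fact that every DAG admits a topological ordering of its vertices. Applied here, this produces a relabeling $v_1,v_2,v_3$ of $\{X,Y,Z\}$ such that every edge is directed from the lower-indexed endpoint to the higher-indexed one. Because the skeleton already contains all three edges $v_1v_2$, $v_1v_3$, $v_2v_3$, the topological order forces their orientations to be $\overset{\rightarrow}{v_1v_2}$, $\overset{\rightarrow}{v_1v_3}$, and $\overset{\rightarrow}{v_2v_3}$.

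From this picture the three roles can simply be read off. The vertex $v_1$ has out-edges to both of the other vertices and no in-edge, so it is a cause of the other two and hence plays the role of the root $R$; symmetrically $v_3$ has in-edges from both other vertices and no out-edge, so it is the effect of the other two and plays the role of the sink $S$; finally $v_2$ has exactly one in-edge (from $v_1$) and exactly one out-edge (to $v_3$), so it is the effect of the root and the cause of the sink, i.e., the bridge $B$. Uniqueness of each role follows from uniqueness of the first and last element of a topological order on a tournament: any second source would be non-adjacent-by-orientation to $v_1$ in a consistent way only if it were incomparable to $v_1$, contradicting completeness of the skeleton; the same argument applies to the sink, and the bridge is then the remaining vertex.

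The one step that is not purely definitional is the assertion that a three-vertex acyclic tournament is \emph{transitive}, i.e., that the three pairwise orientations are mutually consistent and therefore arise from a genuine linear order — this is what makes the topological-ordering argument above go through cleanly. I would dispatch it by direct enumeration: among the $2^3=8$ orientations of the triangle's three edges, exactly the two cyclic orientations (``clockwise'' and ``counterclockwise'') create a directed $3$-cycle and are ruled out by the DAG hypothesis, and one checks immediately that each of the remaining six orientations has a unique vertex with out-degree $2$, a unique vertex with in-degree $2$, and a unique vertex with in-degree and out-degree $1$. This enumeration is the main (and essentially only) obstacle, and it is entirely routine; moreover it is consistent with the count of six triangle DAGs already recorded before the statement and displayed in Figure~\ref{fig:tri}, so identifying $R$, $B$, $S$ in each of them completes the proof.
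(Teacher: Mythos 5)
Your proof is correct, but it takes a somewhat different route from the paper's. The paper argues directly on the drawn triangle: since the three arrows cannot all be oriented in the same rotational (clockwise or counterclockwise) sense without creating a directed $3$-cycle, two consecutive arrows must point the same way, and the root, bridge, and sink are read off as the tail of the first of these arrows, its head, and the head of the second. You instead invoke the existence of a topological ordering $v_1,v_2,v_3$ and use the completeness of the skeleton to force the orientations $v_1\rightarrow v_2$, $v_1\rightarrow v_3$, $v_2\rightarrow v_3$, identifying $R=v_1$, $B=v_2$, $S=v_3$ from in- and out-degrees. Your version is slightly more formal and generalizes immediately (an acyclic tournament on any number of vertices is transitive, with a unique source, unique sink, and a linear order of intermediate roles), which is relevant to the authors' stated future work on more than three variables; the paper's version is a quicker ad hoc observation tailored to the triangle. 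One remark: your concern that the transitivity of the three-vertex acyclic tournament ``is not purely definitional'' is unnecessary --- once a topological order exists and all three edges are present in the skeleton, each edge is forced to point forward in that order, so transitivity is automatic and the backup enumeration over the $2^3$ orientations, while harmless, is redundant.
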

\begin{proof}
Consider fixed labeling on the vertices. Since the graph is acyclic, either two of the arrows are oriented clockwise and the third one is oriented counter clockwise, or vice versa.\\
In either case, two consecutive arrows will have the same direction. The vertex at the tail of the first arrow will be the root variable, the vertex at the head of the first arrow will be the bridge variable, and the vertex at the head of the second arrow will be the sink variable.\\
\end{proof}

Our extra requirement for distinguishing triangle DAGs is for the causal influence with the least strength to be \textit{weak}. Weak here means the causal strength is less than the absolute value of the interaction information among the three variables. Denoting the strength of a causal influence by $\mathcal{C}$, we require that $\mathcal{C}_{min}<|I(R;B;S)|$. Here, 
as mentioned in Subsection \ref{subsec:pre}, 
we need to be able to quantify the strength of a causal effect, for which we use the results from  \cite{janzing2013quantifying}, which was described in Subsection \ref{subsec:pre}.
The postulated quantity in \cite{janzing2013quantifying} for causal influence strength implies that
\begin{equation}
\label{eq:strength}
\begin{aligned}
&\mathcal{C}_{R\rightarrow B}=I(R;B)\\
&\mathcal{C}_{R\rightarrow S}\ge I(R;S|B)\\
&\mathcal{C}_{B\rightarrow S}\ge I(B;S|R)
\end{aligned}
\end{equation}

\begin{figure}[t]
	\centering
	\includegraphics[scale=0.58]{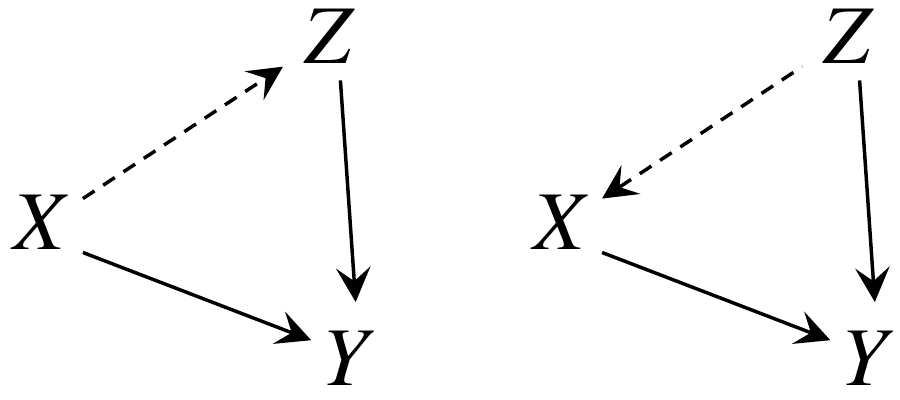}
	\caption{Solid arrows indicate strong causal influence and dashed arrows indicate weak causal influence. This figure shows possible triangle DAGs for the case that the interaction information among the variables is negative, and the causal influence between variables $X$ and $Z$ is weak.}
	\label{fig:cor}
\end{figure}

\begin{theorem}
\label{thm:main}
If in a triangle DAG the interaction information among the variables is negative, and $\mathcal{C}_{min}<|I(R;B;S)|$, then only the causal influence between the Root variable and the Bridge variable is weak.
\end{theorem}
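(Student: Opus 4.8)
The plan is to show that the two arrows pointing into the Sink variable each have causal strength at least $|I(R;B;S)|$, so neither of them can be the weak one, and then to let the hypothesis $\mathcal{C}_{min}<|I(R;B;S)|$ force the unique weak arrow to be $R\rightarrow B$. First I would fix the labels $R$, $B$, $S$ as guaranteed by Lemma \ref{lem:tri} and apply Proposition \ref{prop:intinfo} to the triple to get the two ``sink-conditioned'' forms
\[
I(R;B;S)=I(R;S)-I(R;S\mid B)=I(B;S)-I(B;S\mid R).
\]
Since $I(R;B;S)<0$ by hypothesis we have $|I(R;B;S)|=-I(R;B;S)$, so rearranging and using non-negativity of mutual information gives
\[
I(R;S\mid B)=|I(R;B;S)|+I(R;S)\ \ge\ |I(R;B;S)|,
\]
\[
I(B;S\mid R)=|I(R;B;S)|+I(B;S)\ \ge\ |I(R;B;S)|.
\]

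Next I would invoke the causal-strength lower bounds \eqref{eq:strength} from \cite{janzing2013quantifying}, namely $\mathcal{C}_{R\rightarrow S}\ge I(R;S\mid B)$ and $\mathcal{C}_{B\rightarrow S}\ge I(B;S\mid R)$. Chaining these with the two displayed inequalities yields $\mathcal{C}_{R\rightarrow S}\ge|I(R;B;S)|$ and $\mathcal{C}_{B\rightarrow S}\ge|I(R;B;S)|$; i.e., neither arrow into the Sink is weak in the sense defined before the theorem. But the hypothesis $\mathcal{C}_{min}<|I(R;B;S)|$ says that the smallest of the three causal strengths is strictly below $|I(R;B;S)|$, so that minimizer cannot be $\mathcal{C}_{R\rightarrow S}$ or $\mathcal{C}_{B\rightarrow S}$; hence $\mathcal{C}_{min}=\mathcal{C}_{R\rightarrow B}<|I(R;B;S)|$, so $R\rightarrow B$ is weak, and since the other two strengths are at least $|I(R;B;S)|$ it is the only weak arrow. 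This is precisely the assertion (and matches Figure \ref{fig:cor}).

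I do not expect a serious obstacle here; the only point needing care is the orientation of the bounds borrowed from \cite{janzing2013quantifying}: what the argument uses is the two \emph{lower} bounds on the strengths of the arrows feeding the Sink, exactly as listed in \eqref{eq:strength} once $R,B,S$ are identified, and no upper bound on $\mathcal{C}_{R\rightarrow B}$ is required, because the weakness of $R\rightarrow B$ comes purely by elimination from the $\mathcal{C}_{min}$ hypothesis. One should also note that the identity $\mathcal{C}_{R\rightarrow B}=I(R;B)$ in \eqref{eq:strength} is consistent with $R\rightarrow B$ being the weak arrow but is not otherwise needed.
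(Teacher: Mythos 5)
Your proposal is correct and follows essentially the same route as the paper's proof: both derive $\mathcal{C}_{R\rightarrow S}\ge I(R;S\mid B)\ge |I(R;B;S)|$ and $\mathcal{C}_{B\rightarrow S}\ge I(B;S\mid R)\ge |I(R;B;S)|$ from Proposition \ref{prop:intinfo}, the negativity of the interaction information, and the bounds in \eqref{eq:strength}, then conclude by elimination that $\mathcal{C}_{min}=\mathcal{C}_{R\rightarrow B}<|I(R;B;S)|$. Your observation that the identity $\mathcal{C}_{R\rightarrow B}=I(R;B)$ is not actually needed for the argument is accurate; the paper states it but likewise only uses the two lower bounds on the arrows into the sink.
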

\begin{proof}
From \eqref{eq:strength}, and Proposition \ref{prop:intinfo}, we have
\begin{equation*}
\begin{aligned}
&\mathcal{C}_{R\rightarrow B}=I(R;B)<I(R;B|S)\\
&I(R;S)<I(R;S|B)\le\mathcal{C}_{R\rightarrow S}\\
&I(B;S)<I(B;S|R)\le\mathcal{C}_{B\rightarrow S}.
\end{aligned}
\end{equation*}
Therefore by non-negativity of the mutual information, we have \[\mathcal{C}_{R\rightarrow S}\ge |I(R;B;S)|\] and \[\mathcal{C}_{B\rightarrow S}\ge |I(R;B;S)|.\]Therefore, since $\mathcal{C}_{min}<|I(R;B;S)|$, we must have \[\mathcal{C}_{R\rightarrow B}<|I(R;B;S)|.\]
\end{proof}

Theorem \ref{thm:main} can be utilized in application using the following corollary:
\begin{corollary}
\label{cor:m1}
If in a triangle DAG on variables $X$, $Y$ and $Z$ the interaction information among the variables is negative, and the causal influence between variables $X$ and $Z$ is weak, then
\begin{enumerate}
\item the other two causal influences are not weak,
\item and the only possible triangle DAGs on $\{X,Y,Z\}$ are the ones depicted in Figure \ref{fig:cor}.
\end{enumerate}
\end{corollary}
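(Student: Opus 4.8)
The plan is to obtain the corollary as a direct consequence of Theorem \ref{thm:main}, Lemma \ref{lem:tri}, and the definition of a \emph{weak} causal influence. First I would apply Lemma \ref{lem:tri} to the given triangle DAG to name its three vertices as the Root $R$, the Bridge $B$, and the Sink $S$; since the skeleton is a triangle on $\{X,Y,Z\}$, this is merely a relabeling of $\{X,Y,Z\}$, and the three causal influences of the DAG are exactly those along the edges $R$--$B$, $R$--$S$, and $B$--$S$.

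Next I would translate the corollary's hypothesis into the hypothesis of Theorem \ref{thm:main}. By the definition of weakness stated just before \eqref{eq:strength}, assuming that the causal influence between $X$ and $Z$ is weak means its strength is strictly less than $|I(X;Y;Z)| = |I(R;B;S)|$; in particular $\mathcal{C}_{min} < |I(R;B;S)|$. Together with the assumed negativity of the interaction information, this is precisely what Theorem \ref{thm:main} requires, so we may conclude that the unique weak causal influence in the DAG is the one along the $R$--$B$ edge (and, from the bounds $\mathcal{C}_{R\rightarrow S},\mathcal{C}_{B\rightarrow S}\ge|I(R;B;S)|$ established in its proof, the $R$--$S$ and $B$--$S$ influences are \emph{not} weak).

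The crux is then an identification step: the edge between $X$ and $Z$ carries a weak influence, and the only edge carrying a weak influence is $R$--$B$, hence the $X$--$Z$ edge \emph{is} the $R$--$B$ edge, so $\{X,Z\}=\{R,B\}$ and therefore $Y=S$. Claim~1 follows at once, since the remaining two influences $X$--$Y$ and $Z$--$Y$ are the $R$--$S$ and $B$--$S$ influences, which are not weak. For Claim~2, knowing that $Y$ is the sink and that the structure is acyclic leaves only the orientation of the $X$--$Z$ edge free: either $X\rightarrow Z$ (so $X=R$, $Z=B$) or $Z\rightarrow X$ (so $Z=R$, $X=B$), and in both cases $X\rightarrow Y$ and $Z\rightarrow Y$ with $X$--$Z$ the (unique) weak edge. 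These are exactly the two DAGs in Figure \ref{fig:cor}, which completes the enumeration.

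Since the argument is essentially bookkeeping layered on top of Theorem \ref{thm:main}, I do not expect a genuine obstacle. The one point that needs care is the identification step: one must argue cleanly that the weakness hypothesis concerns a \emph{specific} edge of the triangle and that Theorem \ref{thm:main} forbids any edge other than $R$--$B$ from being weak, so the two edges must coincide. I would also make explicit that ``not weak'' means strength at least $|I(R;B;S)|$, so that Claim~1 and the solid/dashed annotations in Figure \ref{fig:cor} are stated consistently.
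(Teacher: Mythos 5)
Your proposal is correct and is essentially the argument the paper intends: the corollary is stated as a direct consequence of Theorem \ref{thm:main} (the paper gives no separate proof), and your reduction --- weakness of the $X$--$Z$ influence gives $\mathcal{C}_{min}<|I(R;B;S)|$, the theorem then forces the weak edge to be $R$--$B$, hence $Y=S$ and only the two orientations of the $X$--$Z$ edge remain --- fills in exactly the bookkeeping the paper leaves implicit.
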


\begin{figure}[t]
	\centering
	\includegraphics[scale=0.6]{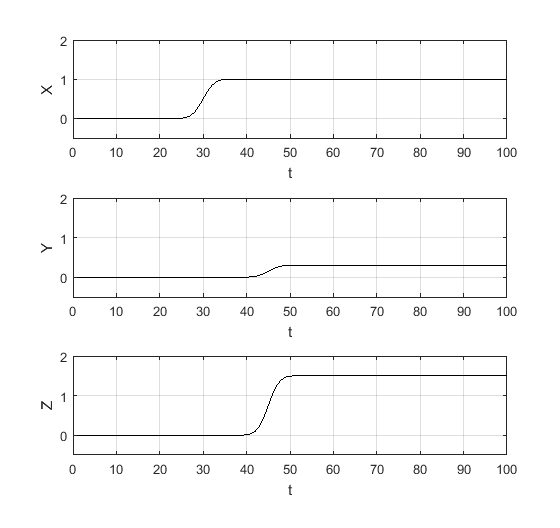}
	\caption{Sample paths of the values of three variables in a triangle structure. Here the temporal information in the sample paths, can help the experimenter to distinguish the cause from the effect.}
	\label{fig:ex2}
\end{figure}

In some applications, due to prior knowledge about the system or temporal knowledge, the root variable is known.
For instance, there is an attribute that the experimenter is randomizing as the root variable in a triangle structure; or by observing sample paths similar to the one shown in Figure \ref{fig:ex2} on a triangle structure, due to temporal information, the root variable could be recognized. That is, the experimenter observes that changing in the value of variable $X$ causes variation in the value of variables $Y$ and $Z$ from the delay, while changing in the values of variables $Y$ and $Z$ do not vary the value of $X$.

In this case, the following corollary of Theorem \ref{thm:main} can be used to uniquely identify the true underlying causal DAG.

\begin{corollary}
\label{cor:m2}
If in a triangle DAG on variables $\{X,Y,Z\}$,
the interaction information among the variables is negative,
and the causal influence between variables $X$ and $Z$ is weak and $X$ is the root variable,
then $Z$ is the bridge variable and $Y$ is the sink variable,
and the only correct causal network among the variables is the DAG on the left side in Figure \ref{fig:cor}.
\end{corollary}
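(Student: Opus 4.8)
The plan is to reduce the statement to Theorem~\ref{thm:main} (equivalently, to Corollary~\ref{cor:m1}) and then simply read off the forced orientation. First I would unpack the hypothesis ``the causal influence between $X$ and $Z$ is weak'': by the notion of weakness fixed just before Theorem~\ref{thm:main}, this means that the causal strength attached to the edge joining $X$ and $Z$ is strictly smaller than $|I(X;Y;Z)|=|I(R;B;S)|$. Since that strength is one of the three causal strengths present in the triangle, it is at least $\mathcal{C}_{min}$, so $\mathcal{C}_{min}<|I(R;B;S)|$ and the assumptions of Theorem~\ref{thm:main} are met (the negativity of the interaction information is assumed outright).

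Next I would invoke Theorem~\ref{thm:main}, which says that the unique weak causal influence in the triangle is the one between the Root variable $R$ and the Bridge variable $B$. Since the $X$--$Z$ influence is weak, the edge between $X$ and $Z$ must be exactly the $R$--$B$ edge; hence $\{X,Z\}=\{R,B\}$, and therefore the remaining variable $Y$ must be the Sink variable $S$. This is precisely part~2 of Corollary~\ref{cor:m1} in the form relevant here. Now I would bring in the extra assumption that $X$ is the Root variable: combined with $\{X,Z\}=\{R,B\}$ and $X=R$, this forces $Z=B$, so $Z$ is the Bridge variable and $Y$ is the Sink variable.

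Finally, I would appeal to Lemma~\ref{lem:tri} to conclude that fixing Root $=X$, Bridge $=Z$, Sink $=Y$ determines all three edge orientations of the triangle: the root $X$ is a cause of both other variables ($X\to Z$ and $X\to Y$) and the bridge $Z$ is a cause of the sink ($Z\to Y$). This is exactly the DAG on the left of Figure~\ref{fig:cor}, and it is the only triangle DAG consistent with the hypotheses, which is the claim. The argument involves no computation; the only step needing a little care is the initial reduction --- checking that ``the $X$--$Z$ influence is weak'' already implies $\mathcal{C}_{min}<|I(R;B;S)|$ so that Theorem~\ref{thm:main} applies --- together with the observation that a (Root, Bridge, Sink) labelling pins down the triangle DAG uniquely through Lemma~\ref{lem:tri}. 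Everything else is bookkeeping.
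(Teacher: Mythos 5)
Your proposal is correct and follows exactly the route the paper intends: reduce to Theorem~\ref{thm:main} (noting that weakness of the $X$--$Z$ influence already gives $\mathcal{C}_{min}<|I(R;B;S)|$), conclude that the weak edge must be the Root--Bridge edge so $\{X,Z\}=\{R,B\}$ and $Y=S$, and then use $X=R$ together with Lemma~\ref{lem:tri} to fix all three orientations. The paper states this corollary without a separate proof for precisely this reason, so there is nothing to add.
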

An example of the application of Corollary \ref{cor:m2}, would be in a medical examination, in which the clinician is aware of two side-effects of the medicine which is being tested, say, headache and insomnia, but the side effects themselves have influence on each other, and the direction of this influence is of interest.

\section{Conclusion}
\label{sec:conc}

We studied interaction information, which is a multivariate generalization of mutual information and indicates the amount of information shared in a set of variables, beyond the information which is shared in any proper subset of those variables. Unlike other conventional measures of information, interaction information can have a negative value. We used this property to discover causal relationships among a triplet of random variables.
We provided a discussion regarding the sign of interaction information and proposed a strategy for classifying causal relationships, which could have not been identified using merely conventional dependency tests.
Interaction information is not as thoroughly studied as its bivariate counterpart. A more comprehensive study of the advantages of this quantity in the field of causal inference, especially in the case of having more than three variables is considered as our future work.

\bibliographystyle{ieeetr}
\bibliography{ref}

\end{document}